\DeclareMathOperator*{\argmin}{arg\,min}
\acrodef{SIPP}{Safe interval path planning}
\acrodef{nTO-AA-SIPP}{Na\"ive Time-Optimal Any-Angle SIPP}
\acrodef{iTO-AA-SIPP}{Time-Optimal Any-Angle SIPP With Inverted Expansions}
\newcommand{\ntoaasipp}{\ac{nTO-AA-SIPP}\xspace}
\newcommand{\itoaasipp}{\ac{iTO-AA-SIPP}\xspace}
\newtheorem{lemma}{Lemma}
\newtheorem{theorem}{Theorem}
\newtheorem{statement}{Statement}
\begin{document}
\title{Towards Time-Optimal Any-Angle Path Planning With Dynamic Obstacles
}

\author {
    Konstantin Yakovlev,\textsuperscript{\rm 1, 2}
    Anton Andreychuk \textsuperscript{\rm 3} \\
}
\affiliations {
    \textsuperscript{\rm 1} Federal Research Center for Computer Science and Control of Russian Academy of Sciences \\
    \textsuperscript{\rm 2} HSE University \\
    \textsuperscript{\rm 3} Peoples' Friendship University of Russia (RUDN University) \\
    yakovlev@isa.ru, andreychuk@mail.com
}




\maketitle

\begin{abstract}
Path finding is a well-studied problem in AI, which is often framed as graph search. Any-angle path finding is a technique that augments the initial graph with additional edges to build shorter paths to the goal. Indeed, optimal algorithms for any-angle path finding in static environments exist. However, when dynamic obstacles are present and time is the objective to be minimized, these algorithms can no longer guarantee optimality. In this work, we elaborate on why this is the case and what techniques can be used to solve the problem optimally. We present two algorithms, grounded in the same idea, that can obtain provably optimal solutions to the considered problem. One of them is a naive algorithm and the other one is much more involved. We conduct a thorough empirical evaluation showing that, in certain setups, the latter algorithm might be as fast as the previously-known greedy non-optimal solver while providing solutions of better quality. In some (rare) cases, the difference in cost is up to 76\%, while on average it is lower than one percent (the same cost difference is typically observed between optimal and greedy any-angle solvers in static environments).

\end{abstract}

\section{Introduction}
Planning a path for an agent operating in 2D environment is a well-studied problem that is often solved by, first, introducing a graph that represents the environment and the ways the agent can move (e.g. 4-connected grid) and, second, by finding a path on this graph. When the environment is static, A* \cite{hart1968formal} or one of its numerous modifications can be used to find an optimal solution to the problem. Yet even an optimal solution is often only an approximation of the true shortest path in 2D due to the discretization of the workspace imposed by a graph. To mitigate this issue the idea of any-angle path finding has been proposed. Any-angle planners, e.g. Theta* \cite{nash2007}, ANYA \cite{harabor2016}, allow moving from one graph vertex to the other even if there is no correspondent edge in the given graph but the move is valid, i.e. the straight line segment connecting the vertices does not intersect any obstacle.

In this work, we are interested in any-angle path planning when dynamic obstacles are present in the environment and the objective to be minimized is the time to reach the goal. In this problem setting, we have to take the time dimension into account, which poses the first challenge. The second challenge is that there is no guarantee anymore that reaching the goal or any other vertex by a geometrically shortest path will lead to or contribute to finding an optimal solution.

\begin{figure}[t]
    \centering
    \includegraphics[scale=0.5]{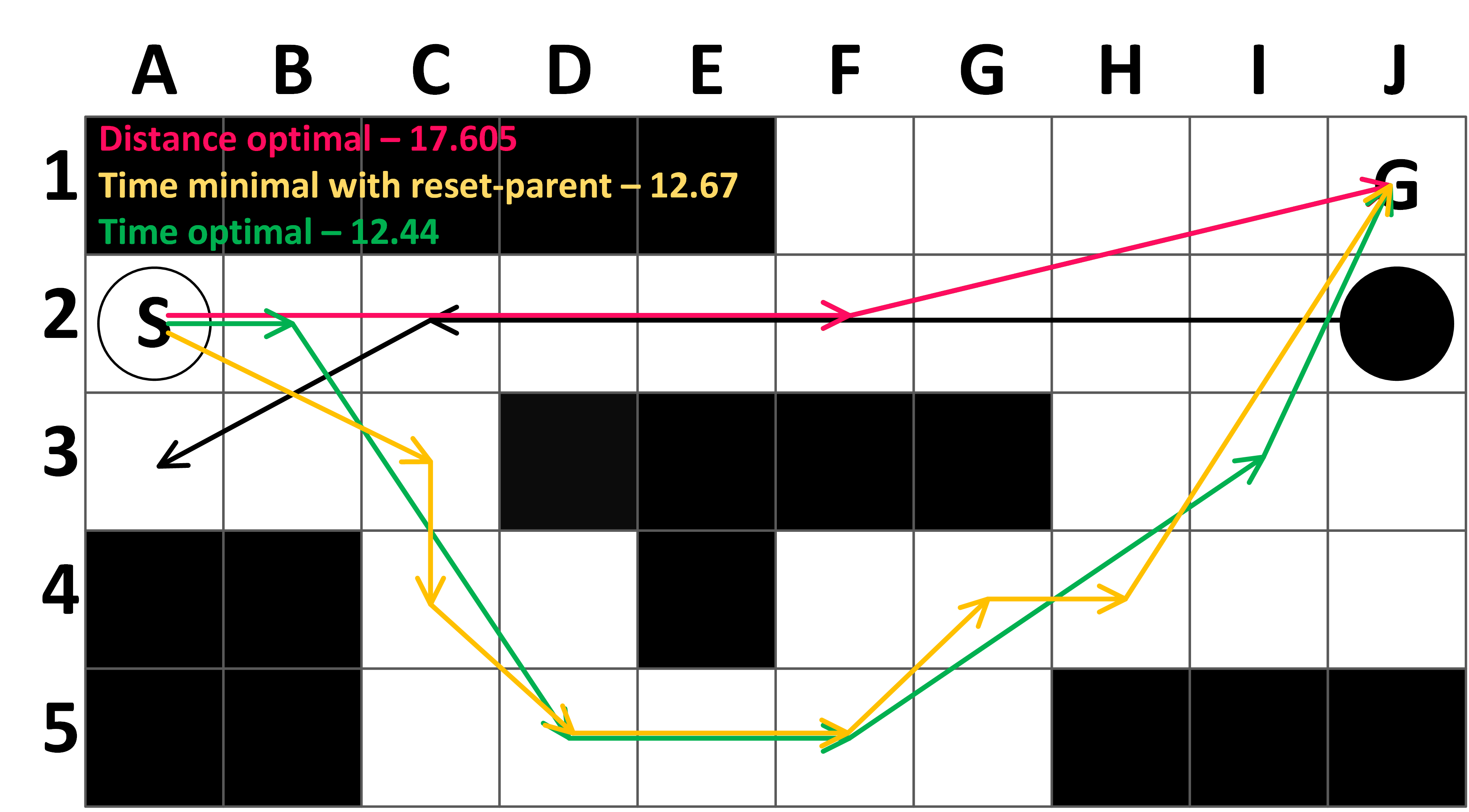}
    \caption{Different any-angle paths for an agent (white disk) on a grid with a dynamic obstacle (black disk). The shortest any-angle path is depicted in red, the time-optimal path -- in green. The radius of the agent/obstacle is 0.4, their moving speed is 1.0. It is beneficial to take a spatial detour to reach the goal as early as possible.
    }
    \label{fig:different-paths}
\end{figure}

Consider an example shown in \figurename~\ref{fig:different-paths}. An agent has to navigate from $A2$ to $J1$ on a 
grid, avoiding a dynamic obstacle, that moves from $J2$ to $A3$ via $C2$. The speed of the agent and the obstacle is the same (1 cell per 1 time unit). The shortest any-angle path from $A2$ to $J1$ is shown in red
. If the agent starts moving immediately, this path can not be safely followed as a collision happens in the vicinity of $E2$, so the agent has to wait. Moreover, the time of that wait plus the time needed to traverse the path is greater than the time needed to execute the detour path shown in green, which is the time-optimal solution in this setting.

Indeed, there exist planners that are capable of handling both dynamic obstacles and any-angle moves, e.g. \cite{yakovlev2017aasipp}, but to the best of our knowledge one that guarantees to find time-optimal solutions is absent. This work aims at filling this gap.

\subsection{Problem Statement}
Consider an agent that navigates the environment discretized to a graph $G=(V, E)$. Vertices correspond to distinct locations and edges -- to the transitions between them. The action set of an agent is composed of actions of two types: \textit{wait} at the current vertex or \textit{move} from one vertex to the other. When moving, the agent follows a straight-line segment, connecting source, and target vertices, with constant speed and with inertial effects neglected.
The cost of an action is its duration. For a move action, its duration equals the length of the corresponding segment divided by the speed of an agent. The duration of a wait action can be any positive number, i.e. the agent can wait in a vertex for any amount of time. 

Two types of moves are distinguished: \textit{i}) regular -- when the agent moves from vertex $u$ to vertex $v$ and the corresponding edge $(u, v)$ is in $G$; and \textit{ii}) any-angle -- when the agent moves from one vertex to the other even though there is no correspondent edge in the graph. To identify whether an any-angle move is valid a dedicated function is given: $los: V \times V \rightarrow \{true, false\}$. It returns $true$ in case a line segment connecting $u$ and $v$ does not intersect any obstacle in the environment w.r.t. agent's size and shape.

A plan is an ordered sequence of time-action pairs: $\pi = (a_1, t_1), (a_2, t_2), ...., (a_n, t_n)$, where $a_i$ stands for actions, and $t_i$ -- for the time moments the agent starts executing them. For a plan to be valid each action should start exactly when the previous one ends. The cost of the plan is the sum of the durations of the constituent actions.

Besides the agent, a fixed number of dynamic obstacles populate the environment and navigate it in the same way the agent does. Their plans, $\pi^1, \pi^2, ..., \pi^k$, are known. We assume that the dynamic obstacles (as well as the agent) do not disappear after accomplishing their plans but rather stay in their target vertices forever. 

Plans for an agent and an obstacle are said to be conflict free if no collision happens between them when they follow these plans. We assume that a collision detection mechanism is given. A plan for an agent, $\pi$, is feasible if it does not conflict with any of $\pi^1, \pi^2, ..., \pi^k$. The problem is to find the least-cost, i.e. \textit{time-optimal}, feasible plan from a given start to a given goal.

\section{Na\"ive Time-Optimal Any-Angle Safe-Interval Path Planning (nTO-AA-SIPP)}

\paragraph{Overview of SIPP} Safe Interval Path Planning \cite{phillips2011sipp} is an algorithm for finding time-optimal paths in environments with dynamic obstacles, whose trajectories are known. It is essentially an A* algorithm that searches through a state-space induced by configuration-interval pairs. The configuration in the considered domain is a graph vertex and the interval is the maximal period of time the vertex can be safely occupied by the agent. Consider the setting depicted in \figurename~\ref{fig:different-paths} and assume that the obstacle starts executing its plan at $t=0$. In that case, it passes through $I2$ and prohibits an agent to occupy the graph vertex which is in the center of that cell from time moment $0.2$ to time moment $1.8$ (not from $0$ to $2$ because of the size of the agent and the obstacle, which is $0.4$ in this example). Thus, SIPP acknowledges two safe intervals and two search nodes are introduced: $n_1=(I2, [0; 0.2])$ and $n_2=(I2, [1.8;+ \infty))$.

The notion of safe intervals is beneficial for several reasons. First, this makes the search space more compact. Second, SIPP naturally allows planning in continuous time. Third, utilizing safe-intervals allows to enumerate \textit{all} states of the search-space beforehand. We will rely on the latter property later on.

\paragraph{Search} SIPP explores the search space in a conventional A* fashion: in each step, the most promising node, $n$, from the frontier is chosen and its successors are generated, thus the node becomes expanded. The most promising node is the one with the minimal $f$-value, which is the sum of $g(n)$ and $h(n)$. $g(n)$ is the time the agent safely reaches $n$ and $h(n)$ is a consistent estimate of time to reach the goal from $n$. 

\paragraph{Generating successors} To generate the successors of $n=(v, [t_1, t_2])$ SIPP iterates through the vertices of the graph that are adjacent to $v$. In the simplest case when any-angle moves are not allowed, successors are generated by considering the transitions defined by the graph edges $(v, v')$. If any-angle moves are allowed and one does not aim at providing optimality guarantees, such moves can be handled by the greedy strategy introduced in~\cite{nash2007}. First, immediate successors, i.e. the ones corresponding to the outgoing graph edges, are generated. Let $n'=(v', [t'_1, t'_2])$ be one of them. Then one checks if the transition to $v'$ is possible from $p(n).vertex$, where $p(n)$ is the predecessor of $n$.
If it is and the time moment the agent arrives to $v'$ following this move belongs to the safe interval of $n'$ and it is smaller compared to arriving to $v'$ using $(v, v')$ edge, 
then the predecessor of $n'$ is reset to $p(n)$ and $g(n)$ is altered appropriately.

This strategy is known to lead to non-optimal solutions even in static environments. Obviously, when dynamic obstacles are present this also holds (see \figurename~\ref{fig:different-paths}, where the plan found by the reset-parent approach is shown in yellow). The ANYA algorithm~\cite{harabor2016} mitigates the issue (for static environments) by reasoning over the sets of geometrically close vertices that are combined together and represent a single search node, characterized by a single $f$-value. Straightforward application of this technique is not possible in the considered domain as being geometrically close for the vertices does not mean that their $g$-values -- earliest arrival times -- are close
. 

\paragraph{nTO-AA-SIPP} To avoid missing \emph{any} valid any-angle move the following approach to successors generation is proposed. When expanding a node $n$, \textit{all} states of the search space, for which a valid transition from $n$ exists, should be considered as potential successors. More formally, the set of potential successors is defined as: $PSUCC(n)=\{n'=(v', [t'_1, t'_2])\;|\; los(v, v')=true\}$. Please note, that not all successors in $PSUCC(n)$ might be valid w.r.t. time intervals and dynamic obstacles. Indeed, invalid successors should be discarded.

\begin{figure}
    \centering
    \includegraphics[width=\columnwidth]{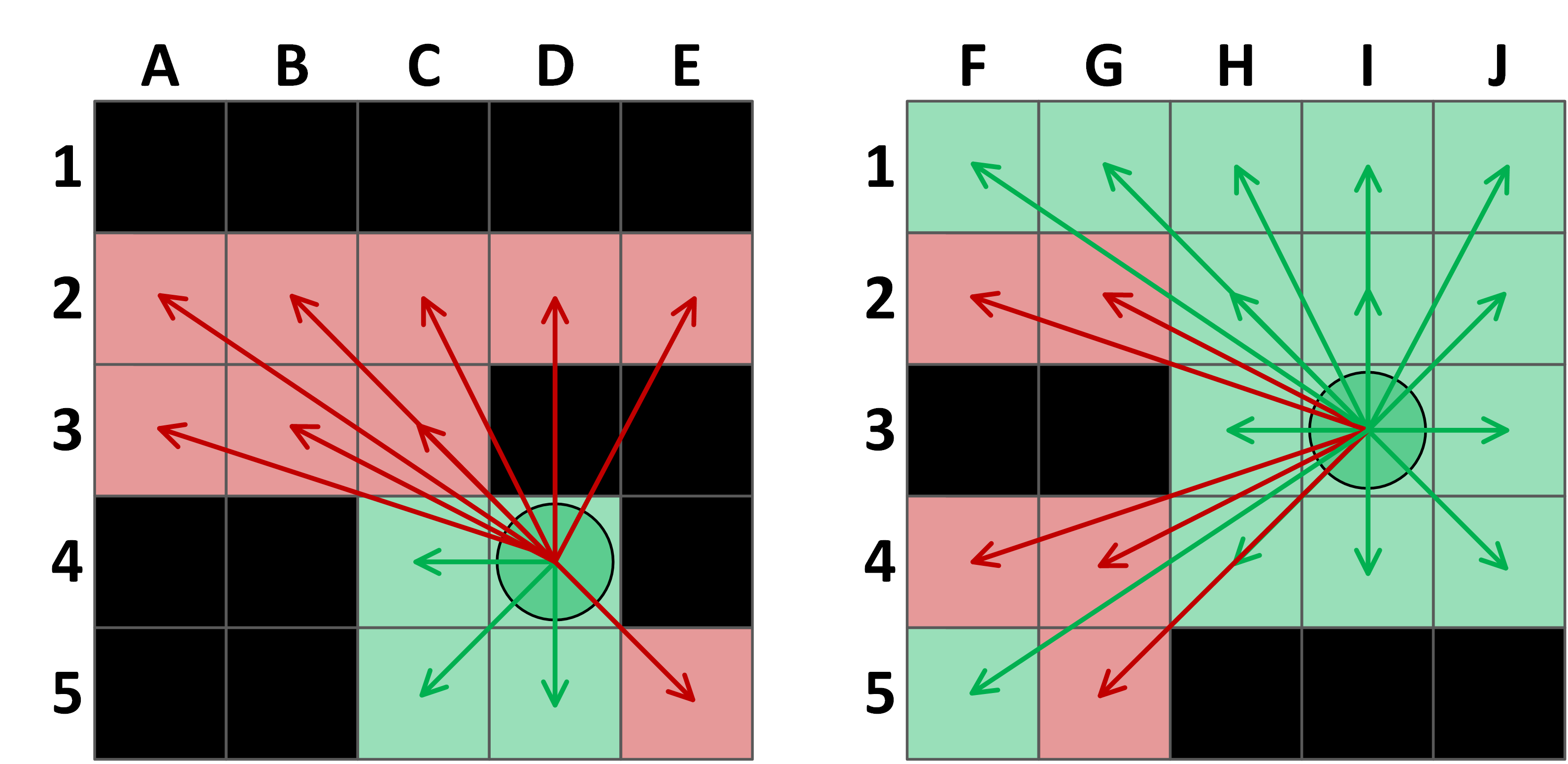}
    \caption{Potential any-angle successors for a node. Cells reachable from $D4$, $I3$ w.r.t. agent's size ($r=0.4$) are shown in green. Arrows symbolize the $los$ function.}
    \label{fig:los_examples}
\end{figure}

We call the algorithm that straightforwardly implements this approach (of generating \emph{all} possible successors for every node under expansion) -- \ntoaasipp. Its pseudocode (which is very similar to the regular A*/SIPP) is shown in Algorithm~\ref{alg:ntoaasipp}. Obviously, considering all possible valid moves between the nodes makes \ntoaasipp complete and optimal. More formally:

\begin{statement}
\ntoaasipp always finds a solution if it exists (and if it does not \ntoaasipp correctly terminates returning \textit{`failure'}) and this solution is optimal, i.e. the resultant plan is time optimal.
\end{statement}

This statement follows from the completeness/optimality of the original SIPP~\cite{phillips2011sipp}.

\section{\itoaasipp}

Generating all possible successors at each expansion as in \ntoaasipp might lead to great computational expenses due to the colossal branching factor as the number of potential successors is proportional to \emph{all} vertices in the graph. Indeed, in some cases, this number is not large as many moves are pruned via the $los$ function, but in general, one can not anticipate this (see Fig.~\ref{fig:los_examples}). Moreover, considering each potential successor is a costly procedure in SIPP that relies on non-trivial reasoning about the time intervals and validity of the move w.r.t. dynamic obstacles. To this end, we suggest ``inverting'' the expansion procedure when not the successors are generated, but rather the predecessors are estimated during the search.

\paragraph{Idea} Assume, that all search nodes are generated beforehand (this is possible due to the use of safe-intervals). The $g$-values of all nodes are initially set to $\infty$. Consider now that after a number of search iterations the $g$-values of some nodes are equal to the true costs of the time-optimal paths from the start. We will refer to such nodes as to the \emph{consistent} ones, bearing the terminology from the LPA* algorithm~\cite{koenig2004lifelong}. Their parents are, obviously, consistent as well. Think now of the remaining inconsistent nodes. The true costs of the time-optimal paths to them are not known, as well as their parents, but it's evident that these true costs might be achieved only by considering transitions from the consistent nodes and that the latter might be thought of as being \textit{potential parents} to them. So, the idea is to consider transitions from the potential parents in a systematic fashion, with the aim of turning inconsistent nodes to consistent ones. When the goal node becomes consistent, the time-optimal solution can be reconstructed by tracing the pointers to the parent nodes.

\SetInd{0.5em}{0.5em}
\begin{algorithm}[t]
\caption{Naive TO-AA-SIPP}
\label{alg:ntoaasipp}
OPEN$:=\{start\}$; CLOSED$:=\oslash$\;
\While{$OPEN \neq \oslash$}
{
    $n:= \argmin_{n \in OPEN} f(n)$\;
    move $n$ from OPEN into CLOSED\;
    \If{$n = goal$}
    {
        \textbf{return} ReconstructPathFromParents($n$)\;
    }
    $PSUCC(n)$ := generate all potential successors of $n$\;
    \For{each node n' in PSUCC(n)}
    {
        \If{$n'$ in CLOSED}
        {
            continue\;
        }
        $g\_new$ = ValidateTransition($n',n$)\;
        \If{$g\_new < g(n')$}
        {
            $g(n') := g\_new$\;
            $f(n') := g\_new + h(n')$\;
            $parent(n') := n$\;
            insert or update $n'$ in OPEN\;
        }
    }
}
        \textbf{return} \textit{path not found}\;
\end{algorithm}

\paragraph{Implementation (high-level)} To implement this idea the following high-level steps should be performed instead of expansion at each iteration of the search:
\begin{enumerate}
  \item Identify the best inconsistent node $n$ and the best potential parent -- $bpp(n)$
  \item Try to decrease $g(n)$ by considering a transition $bpp(n) \rightarrow n$
  \item Estimate whether $n$ became \textit{consistent}. If it did, move it to the set of potential parents
\end{enumerate}

Step 1 chooses the best inconsistent state. It's analogous to choosing the best node in A* (SIPP) for the expansion. As in A* we suggest choosing the node that is likely to belong to the least-cost path from start to goal, i.e. the node with the minimal $f$-value. What is different is that we compute $f$-value as $f(n)=g_{low}(n)+h(n)$, where $g_{low}(n)=g(bpp(n))+h(bpp(n), n)$. By $h(n, n')$ we denote an estimate of time needed to reach $n'$ from $n$. We assume that $h$ satisfies triangle inequality not only w.r.t. the goal node but w.r.t. any node in the search space: $\forall n, n', n''\,:\,h(n, n'') \geq h(n, n') + h(n', n'')$.

Please note, that so far two types of $g$-values are introduced: $g(n)$ which is as an (over)estimate of the cost of the shortest path from start to $n$ (the same as in A*) and $g_{low}(n)$, which is, conceptually, a ``second-order'' estimate of such cost used to compute $f(n)$. The difference is that $g_{low}(n)$ is computed without taking the true cost of the transition from the predecessor into account but rather relying on the consistent estimate of that cost, i.e. $h(bpp(n), n)$, while $g(n)$ is computed based on the complete sequence of feasible transitions from start to $n$. We need a $g_{low}$-value as we do not generate and validate successors as in vanilla A*. Initially, $g_{low}(n)=h(start, n)$ if $los(start,n)=true$ and $g(n)=\infty$ for every $n$ except $start$. For the start node it is: $g_{low}(start)=g(start)=0$.

Step 2 is dedicated to validating the transition to $n$ from its best potential parent, $bpp(n)$. This includes performing all the costly checks that regular SIPP would have done while expanding $bpp(n)$ and generating $n$ as a successor, i.e. checking for the collisions with the dynamic obstacles, computing the earliest arrival time to a destination that avoids collision with the latter etc. This is conceptually similar to performing \emph{lazy} expansions like in Lazy Theta*~\cite{nash2010lazy}. If the transition is valid and $g(bpp(n))+c(bpp(n),n) < g(n)$, where $c$ denotes the cost of a transition, then $g(n)$ is set to the left-hand value (as in A*/SIPP).

Step 3 checks whether after Steps 1 and 2 the node $n$ became consistent, i.e. whether its $g$-value (possibly altered at Step 2) is the cost of the time-optimal plan from the start. To verify this, one needs to establish that there exists no other node in the search space (recall, that initially all the search nodes have already been generated) that being set as the parent of $n$ results in arriving at $n$ earlier. We will explain how to implement this (non-trivial) check further when describing the pseudo-code.

\paragraph{Pseudo-code}
First, all search nodes are generated and their $g$, $g_{low}$, $f$-values are set appropriately -- see Algorithm~\ref{alg:init}. For every node $n$, we choose to explicitly maintain a list of potential parents -- PARENTS($n$). This list contains consistent nodes from which $n$ is reachable w.r.t. static obstacles (we check this via the given $los$-function). Initially, the only node that is added to this list for every $n$ is $start$ (Line 7, Algorithm~\ref{alg:init}). $parent(n)$ is assumed to be the node that was used to compute $g(n)$ and this node is not a part of PARENTS($n$). It is set to \texttt{null} initially for every node except $start$. The latter is deemed consistent and is added to CLOSED. All other nodes of the search space are put in OPEN.

The following steps form the main loop of \itoaasipp -- Algorithm \ref{alg:mail-loop}. The best inconsistent node, $n$, is retrieved from OPEN. If its best potential parent, $bpp(n)$, resides in PARENTS($n$) it is removed from this list. Then a transition from $bpp(n)$ to $n$ is validated (Line 5). At this stage, we perform all the checks the regular SIPP would do when expanding $bpp(n)$ and generating $n$. The outcome is the $g\_{new}$ which is the time by which $n$ is reached via $bpp(n)$. If the transition is invalid $g\_{new} = \infty$. If $g\_{new}$ is less than $g(n)$ then we set $g(n)$ to this value and set $bpp(n)$ to be the parent of $n$ (Lines 6-7). By now we know that there exists a valid plan from $start$ to $n$ via $parent(n)$ and the agent will reach $n$ by time moment $g(n)$. The rest of the code (Lines 8-24) is dedicated to selecting the new best potential parent for $n$ (inside \textit{NewBestPotentialParentExists}) and checking whether $n$ became consistent 
or not
. 

To find the new best potential parent we first set $bpp$ of $n$ to $parent(n)$ (i.e. the node which was used to compute the $g$-value of $n$) at Line 1, Algorithm~\ref{alg:newbppexists}, and then iterate through all the nodes in PARENTS($n$) to identify the one which might decrease $g(n)$. In case such node is absent in PARENTS($n$) we keep $bpp(n)$ equal to $parent(n)$. That explains why we need to perform the check at Line 3 of the main loop. It might be the case that $bpp(n)=parent(n)$ and it is not in PARENTS($n$) so the removal operation at Line 4, Algorithm~\ref{alg:mail-loop} is invalid.

Checking whether $n$ became consistent is split into two parts. First, we check whether there exists a node in CLOSED that might decrease $g(n)$. This is done by invoking \textit{NewBestPotentialParentExists} function at line 8. If such a node exists we re-insert $n$ back to OPEN (Line 9). If no consistent node might be used to decrease $g(n)$, we check the same for the inconsistent nodes (Line 11). We will prove further that the introduced checks allow to verify whether the current node has become consistent or not. If it has, we put it in CLOSED and add it to the set of potential parents for each node $n'$ residing in OPEN s.t. $los(n, n')=true$  -- Lines 15-22.

Finally, a case, when $g(n)$ can not be decreased by a transition from a consistent node, but, at the same time, might be decreased by a transition from an inconsistent node, is handled at Lines 23-24. $n$ is kept in OPEN in that case.

There are two stop criteria for the algorithm. First, if we identify that the goal node became consistent (Line 13) we terminate and return the plan reconstructed using the parent-pointers. Second, if all nodes in OPEN have their $f$-values set to $\infty$ (\texttt{while} condition at Line 1) we terminate inferring that no feasible plan exists (Line 25). 

\paragraph{Similarity to LPA*}
An informed reader is likely to get an impression that the proposed algorithm resembles much the seminal LPA* algorithm~\cite{koenig2004lifelong}. Indeed, \itoaasipp and LPA* share similarities, e.g. LPA* also has two ``g-values'' for every node: the $g$-value itself and the $rhs$-value (right-hand-side-value), corresponding to $g_{low}$ of \itoaasipp. The formula for $g_{low}(n)$/$rhs(n)$ looks the same (however it is not, as $rhs(n)$ relies on the actual cost of the transition from the best predecessor, while $g_{low}(n)$ relies on the heuristic estimate of such cost). Nonetheless, the key idea of \itoaasipp~-- inverting the expansions -- is different from the key idea of LPA* -- re-using the parts of the search-tree to speed up consequent search iterations. Moreover, there are many technical differences between the algorithms. E.g. the $rhs$-value in LPA* can be greater than the $g$-value, while the $g_{low}$-value in \itoaasipp ~-- can not (and this is crucial to the behavior of \itoaasipp as we will show further on). In LPA* a node $n$ is consistent \emph{iff} $g(n)=rhs(n)$, however in \itoaasipp $g(n)=g_{low}(n)$ does not necessarily infer that $n$ is consistent.

\SetInd{0.5em}{0.5em}
\begin{algorithm}[t]
\caption{TO-AA-SIPP Initialization}
\label{alg:init}
OPEN=$\oslash$; CLOSED=$\oslash$\;
Generate all the search nodes\;
\For{each node $n=(v, [t_1, t_2])$}
{
    \If{los(n, start) = true}
    {
    $g_{low}(n):=max(t_1, h(start,n))$\;
    $bpp(n):=start$\;
    PARENTS$(n):=\{start\}$\;
    }
    \Else
    {
    $g_{low}(n):=\infty$\;
    $bpp(n):=null$\;
    PARENTS$(n):=\oslash$\;
    }
    $f(n):=g_{low}(n)+h(n)$\;
    $g(n):=\infty$\;
    $parent(n):=$ null\;
    insert $n$ into OPEN\;
}
$g(start)=0$\;
remove $start$ from OPEN and add it CLOSED\;
\end{algorithm}

\SetInd{0.5em}{0.5em}
\begin{algorithm}[t]
\caption{NewBestPotentialParentExists($n$)}
\label{alg:newbppexists}
$g_{low}(n) := g(n)$; $bpp(n) := parent(n)$\;
$f(n):=g_{low}(n)+h(n)$\;
NewBPPFound:=$false$\;
\For{ \textup{ \textrm{each $n' \in \textrm{PARENTS}(n)$}} }
{
    \If{ $g(n')+h(n, n') < g_{low}(n)$}
    {
        $g_{low}(n)=g(n')+h(n, n')$\;
        $bpp(n)=n'$\;
        $f(n):=g_{low}(n)+h(n)$\;
        NewBPPFound:=$true$\;
    }
}
\textbf{return} NewBPPFound\;
\end{algorithm}

\subsection{Theoretical Properties of \itoaasipp} 

Next we establish that \itoaasipp is complete and optimal.

\begin{lemma}
For any node $n$ it always holds that $g_{low}(n) \leq g(n)$.
\end{lemma}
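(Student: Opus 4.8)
The plan is to establish the statement as a loop invariant and prove it by a simple induction on the number of iterations of the main loop of \itoaasipp (Algorithm~\ref{alg:mail-loop}): at the end of every iteration, $g_{low}(n) \leq g(n)$ holds for every node $n$. The base case is the state right after the initialization routine (Algorithm~\ref{alg:init}): for $start$ we have $g_{low}(start) = g(start) = 0$, and for every other node $g(n) = \infty$, so $g_{low}(n) \leq g(n)$ holds trivially whether $g_{low}(n)$ was set to $\max(t_1, h(start,n))$ or to $\infty$.

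Before doing the inductive step I would pin down the (few) places where these two values can ever change. The value $g(n)$ is never increased anywhere in the pseudo-code; after initialization it is only ever decreased, and this happens exclusively at Lines~6--7 of the main loop, and only for the single node $n$ just popped from OPEN. The value $g_{low}(n)$ is assigned only (i) at initialization, (ii) at Line~1 of \textit{NewBestPotentialParentExists}, where it is \emph{unconditionally} reset to the current $g(n)$, and (iii) at Line~6 of that same routine, which is guarded by the test $g(n') + h(n,n') < g_{low}(n)$ and therefore replaces $g_{low}(n)$ with a strictly smaller value. In particular, appending a newly-consistent node to the PARENTS list of some OPEN node (Lines~15--22 of the main loop) does not alter any $g_{low}$-value.

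Given these structural observations, the inductive step is immediate. Fix an iteration that processes node $n$. For every other node $m$, neither $g(m)$ nor $g_{low}(m)$ is touched, so the invariant persists for $m$ by the inductive hypothesis. For $n$ itself, $g(n)$ may be decreased at Lines~6--7, but this is immediately followed (Line~8) by the call \textit{NewBestPotentialParentExists}($n$), whose Line~1 resets $g_{low}(n) := g(n)$, re-establishing equality; every subsequent assignment to $g_{low}(n)$ inside that call (and inside the analogous check invoked at Line~11) can only lower it further. Hence $g_{low}(n) \leq g(n)$ at the iteration's end, and together with the unchanged nodes the invariant is maintained.

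I expect the only delicate point to be the sequencing / ``no bad window'' argument: one has to verify that the reset $g_{low}(n) := g(n)$ always occurs \emph{after} the possible decrease of $g(n)$ at Lines~6--7 (so that a stale $g_{low}(n)$ left over from an earlier iteration is never paired with a freshly lowered $g(n)$), and that $g(n)$ is indeed decreased only for the node currently being processed, for which \textit{NewBestPotentialParentExists}($n$) is invoked within the same iteration. Once this bookkeeping is confirmed, the lemma follows; the proof is a short, purely structural induction with no nontrivial inequalities to chase — in particular, the triangle inequality on $h$ and the properties of the $los$-function play no role here.
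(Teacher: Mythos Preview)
Your overall inductive framework matches the paper's, but there is a concrete misreading of the pseudocode that leaves a gap. You assert that ``appending a newly-consistent node to the PARENTS list of some OPEN node (Lines~15--22 of the main loop) does not alter any $g_{low}$-value,'' and consequently that for every node $m$ other than the one being processed ``neither $g(m)$ nor $g_{low}(m)$ is touched.'' This is false: Line~19 of Algorithm~\ref{alg:mail-loop} explicitly sets $g_{low}(n') := g(n) + h(n,n')$ for nodes $n'$ residing in OPEN. Your case analysis therefore omits precisely the assignment that the paper singles out in the final paragraph of its own proof. (A smaller slip: there is no ``analogous check invoked at Line~11'' that modifies $g_{low}$; Line~11 is a bare comparison.)

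The repair is immediate and fits your scheme: the assignment at Line~19 is guarded by the strict test at Line~18, so it can only \emph{decrease} $g_{low}(n')$, while $g(n')$ is untouched during that block; hence $g_{low}(n') \leq g(n')$ survives by the inductive hypothesis. Once you add this case, your argument goes through. It is worth noting that, after the patch, your route is slightly more elementary than the paper's: by asserting the invariant only at iteration boundaries and relying on the unconditional reset $g_{low}(n):=g(n)$ at Line~1 of \textit{NewBestPotentialParentExists}, you sidestep the paper's appeal to the admissibility/consistency of $h$ that it uses to cover the window between Lines~7 and~8.
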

\label{lemma:g_low-g_n}
\begin{proof}

After the initialization (Algorithm~\ref{alg:init}) this relation holds for all nodes. At each iteration of the main loop (Algorithm~\ref{alg:mail-loop}), the $g$-value of the current node $n$ can be altered only at Line 7. It is set to $g\_new$ which equals $g(bpp(n))+c$, where $c$ is the cost of the collision-free transition from $bpp(n)$ to $n$ (computed at Line 5). Note that $c \geq h(bpp(n), n)$ due to admissible and consistent heuristic. At the same time $g_{low}(n)=g(bpp(n)) + h(bpp(n), n)$, thus $g_{low} (n) \leq g(n)$ after updating $g(n)$ at Line 7. Next, $g_{low}(n)$ might be altered at Line 8 inside the \textit{NewBestPotentialParentExists} function. When it is called $g_{low}(n)$ is first set to $g(n)$ (Line 1, Algorithm~\ref{alg:newbppexists}) and then might only decrease (Line 5, Algorithm~\ref{alg:newbppexists}), while $g(n)$ stays unaltered. Thus after Line 8 of the main loop $g_{low}(n) \leq g(n)$. In no subsequent line of code, the $g$/$g_{low}$-value of the currently explored node $n$ is updated. Thus for node $n$ the statement of the Lemma holds.

One may also note that at Line 19 of the main loop the $g_{low}$-value of some other node, $n'$ is updated. Observe, however, that its $g_{low}$-value might only decrease, while its $g$-value stays unaltered. This infers that for that node the statement of the Lemma also holds.

\end{proof}

\SetInd{0.5em}{0.5em}
\begin{algorithm}[t]
\caption{\itoaasipp Main Loop}
\label{alg:mail-loop}
\While{ \textrm{ $\min_{n \in \textrm{OPEN}} f(n) < \infty$ } }
{
    $n:= \argmin_{n \in \textrm{OPEN}} f(n)$; remove $n$ from OPEN\;
    \If{ \textup{$bpp(n) \in \textrm{PARENTS}(n)$} }
    {
        remove $bpp(n)$ from PARENTS$(n)$\;
    }
    $g\_{new}:=$ ValidateTransition$(n, bpp(n))$\;
    \If{$g\_{new}<g(n)$}
    {
        $g(n) := g\_{new}$; $parent(n):=bpp(n)$\;
    }

    \If{NewBestPotentialParentExists($n$)}
    {
        insert $n$ into OPEN\;
        continue\;
    }
    \If{g(n)+h(n) $\leq$ min f-value in \textup{OPEN}}
    {
        insert $n$ into CLOSED\;
        \If{$n = goal$}
        {
            \textbf{return} ReconstructPathFromParents($n$)\; 
        }
        \For{each $n' \in$ \textup{OPEN}}
        {
            \If{$los(n, n') = true$}
            {
                insert $n$ into PARENTS$(n')$\;
                \If{$g(n)+h(n,n') < g_{low}(n')$}
                {
                    $g_{low}(n') := g(n) + h(n,n')$\;
                    $bpp(n') := n$\;
                    $f(n') := g_{low}(n') + h(n')$\;
                    update $n'$ in OPEN\;
                }
            }
            
        } 
    } 
    \Else
    {
        insert $n$ into OPEN\;
    }
}
\textbf{return} \textit{path not found}\;
\end{algorithm}

\begin{lemma}
For any $i$: $g_{low}^i(n) \leq g_{low}(n)$, where $g_{low}^i(n)$ is the $g_{low}$-value of the node $n$ extracted from OPEN at the beginning of the $i$-th iteration of the \itoaasipp main loop and $g_{low}(n)$ is the $g_{low}$-value of $n$ at the end of the iteration.
\end{lemma}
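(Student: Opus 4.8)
The plan is to localize every place where the $g_{low}$-value of the node $n$ extracted at iteration $i$ can change during that iteration, and to bound each possible new value from below by $g_{low}^i(n)$. Inspecting Algorithm~\ref{alg:mail-loop}, $g_{low}(n)$ is altered only inside the single call \textit{NewBestPotentialParentExists}$(n)$ at Line~8: Lines~3--7 modify PARENTS$(n)$, $g(n)$ and $parent(n)$ but never $g_{low}(n)$; Line~19 updates only the $g_{low}$-values of \emph{other} nodes; and after Line~8 the node $n$ is re-inserted into OPEN or placed into CLOSED with $g_{low}(n)$ untouched. By Lines~1--6 of Algorithm~\ref{alg:newbppexists}, the value that call leaves in $g_{low}(n)$ --- i.e.\ the value at the end of the iteration --- is
\[
 g_{low}(n)=\min\Bigl\{\,g(n),\ \min_{n'\in \textrm{PARENTS}(n)}\bigl(g(n')+h(n',n)\bigr)\Bigr\},
\]
with $g(n)$ taken after the possible decrease at Line~7 and PARENTS$(n)$ after the possible removal of $bpp^i(n)$ at Line~4. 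Hence it suffices to establish (i) $g_{low}^i(n)\le g(n)$ at that point, and (ii) $g_{low}^i(n)\le g(n')+h(n',n)$ for every $n'$ still in PARENTS$(n)$ when \textit{NewBestPotentialParentExists}$(n)$ runs.

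For (i): after Line~7, $g(n)=\min\{g^i(n),g\_new\}$ with $g\_new=\textit{ValidateTransition}(n,bpp^i(n))$, and $g_{low}^i(n)\le g^i(n)$ by Lemma~1, so it is enough to show $g_{low}^i(n)\le g\_new$. If the transition is invalid, $g\_new=\infty$ and $g(n)$ is unchanged; otherwise $g\_new=g(bpp^i(n))+c$ with $c\ge h(bpp^i(n),n)$ (as in the proof of Lemma~1) and $g\_new$ lies in $n$'s safe interval $[t_1,t_2]$ by the standard SIPP construction, hence $g\_new\ge\max\{t_1,\,g(bpp^i(n))+h(bpp^i(n),n)\}$. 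It remains to note $g_{low}^i(n)\le\max\{t_1,\,g(bpp^i(n))+h(bpp^i(n),n)\}$, which is an invariant: whenever $g_{low}(n)$ is stored together with a potential parent $p$ as $bpp(n)$ it is stored as $\max\{t_1,\,g(p)+h(p,n)\}$ (the initialization value with $p=start$, and the form used at Lines~19--20 of the main loop and Lines~6--7 of Algorithm~\ref{alg:newbppexists}), while between such assignments $g(p)$ is frozen (potential parents live in CLOSED, whose $g$-values never change) and $g_{low}(n)$ only decreases. The remaining subcase, $bpp^i(n)=parent(n)$ chosen in the ``no new best potential parent'' branch, is handled by a companion invariant: then $g_{low}^i(n)=g^i(n)$ and $g^i(n)$ is the value \textit{ValidateTransition} returned for the transition from $parent(n)$, whose state is now frozen, so Line~5 again returns $g\_new=g^i(n)=g_{low}^i(n)$, Line~7 does not fire, and (i) holds with equality.

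For (ii), the relevant invariant is: at the start of any iteration, for the node $n$ in OPEN every potential parent $n'\in\textrm{PARENTS}(n)$ with $n'\ne bpp(n)$ satisfies $g_{low}(n)\le\max\{t_1,\,g(n')+h(n',n)\}$. It holds vacuously after initialization, where PARENTS$(n)=\{start\}=\{bpp(n)\}$; it is re-established whenever \textit{NewBestPotentialParentExists} runs, because there $g_{low}(n)$ becomes a minimum over all current potential parents with $bpp(n)$ an argmin; and it is preserved by the promotion block (Lines~17--22) of the main loop, since a newly closed node $c$ either lowers $g_{low}(n)$ and becomes $bpp(n)$ (the formerly distinguished parent then satisfying the bound by the strict decrease) or fails to lower it (and then $c$ itself satisfies the bound by the failed comparison). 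Because the only change PARENTS$(n)$ undergoes before Line~8 of the current iteration is the deletion of $bpp^i(n)$ at Line~4, every remaining $n'$ was a potential parent of $n$ distinct from $bpp^i(n)$ already at the iteration's start, so $g_{low}^i(n)\le\max\{t_1,\,g(n')+h(n',n)\}$, and consequently the loop inside \textit{NewBestPotentialParentExists} can replace $g_{low}(n)$ only by quantities $\ge g_{low}^i(n)$. Combining (i), (ii), and the displayed formula yields $g_{low}^i(n)\le g_{low}(n)$.

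The step I expect to be the main obstacle is sustaining these invariants jointly across all the sites that touch $g_{low}$, $bpp$, and PARENTS, and in particular the ``no new best potential parent'' branch, where $g_{low}(n)$ is overwritten by $g(n)$: one has to argue both that $bpp^i(n)=parent(n)$ is then outside PARENTS$(n)$ (so that Line~4 is a no-op and the loop in Algorithm~\ref{alg:newbppexists} inspects only parents already dominating $g_{low}^i(n)$) and that re-validating the transition from $parent(n)$ reproduces the stored $g(n)$ exactly, which rests on $parent(n)\in$ CLOSED having a frozen $g$-value and on the obstacle trajectories being fixed. A handful of degenerate cases also need attention --- notably that a node with $g(n)=\infty$ (equivalently $bpp(n)=\texttt{null}$) is never extracted, since it never attains the minimal finite $f$-value --- as does the bookkeeping between deletions at Line~4 and insertions at Line~17.
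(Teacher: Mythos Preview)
Your argument is essentially the paper's: both observe that $g_{low}(n)$ changes only inside \textit{NewBestPotentialParentExists}, invoke Lemma~1 for the reset $g_{low}(n):=g(n)$, and then show that no remaining member of PARENTS$(n)$ can drive $g_{low}(n)$ below $g_{low}^i(n)$ --- the paper via a short contradiction (such a parent would already have become $bpp(n)$, forced an earlier extraction, and been removed), you via an explicitly maintained invariant. Two minor remarks: your step~(i) does more work than necessary, since Lemma~1 already yields $g_{low}^i(n)\le g(n)$ at the moment \textit{NewBestPotentialParentExists} is called; and the form $\max\{t_1,g(p)+h(p,n)\}$ you rely on throughout appears only in the initialization (Algorithm~\ref{alg:init}, Line~5), whereas the updates at Line~19 of Algorithm~\ref{alg:mail-loop} and Line~6 of Algorithm~\ref{alg:newbppexists} store $g(p)+h(p,n)$ without the $t_1$ cap.
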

\label{lemma:g_low_non_decreasing}
\begin{proof}

Consider a node $n$ which is extracted from OPEN with the $g_{low}$-value equal to $g_{low}^i(n)$. The only place where the $g_{low}$-value is altered is inside the \textit{NewBestPotentialParentExists} function. It first is set to be equal to $g(n)$ (Line 1, Algorithm~\ref{alg:newbppexists}) which is greater or equal to $g_{low}^i(n)$ (Lemma 1). Thus, if the condition at Line 5 of Algorithm~\ref{alg:newbppexists} is never met further on (meaning that the $g_{low}$-value does not change anymore), the claim holds. 

Suppose that the condition at Line 5 is met, and $g_{low}$ is decreased at Line 6 to some value that we denote here as $g_{low}'(n)$. Moreover, suppose that $g_{low}'(n) < g_{low}^i(n)$. This means that at the current iteration of the main loop there exists a node in PARENTS($n$), $n'$, which delivers a better $f$-value to $n$. But the only place in the code where a node is added to PARENTS is Line 17 of Algorithm~\ref{alg:mail-loop} and in case the new parent is better than the existing ones (i.e. it can be used to achieve minimal $g_{low}$) it is set to be $bpp(n)$ and the $f$-value is updated accordingly (Lines 18-21, Algorithm~\ref{alg:mail-loop}). Thus $n$ with that $f$-value, equal to $g_{low}'(n) + h(n)$, which is lower than $g_{low}^i(n) + h(n)$, should have been extracted from OPEN either at the beginning of the current iteration (which leads to contradiction) or at some previous iteration. In the latter case $n'$ should have been removed from PARENTS($n$) before iteration $i$. This contradiction concludes the proof.

\end{proof}

\begin{lemma}
The sequence $\{ f_{min}^1, f_{min}^2, ..., f_{min}^K\}$, where $f_{min}^i$ is the $f$-value of the node extracted from OPEN at the $i$-th iteration of the search, is non-decreasing.
\end{lemma}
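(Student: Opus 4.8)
The plan is to lift the classical fact that \astar with a consistent heuristic extracts nodes in non-decreasing $f$-order to the inverted-expansion mechanics of \itoaasipp. Everything hinges on one reduction: it suffices to prove that at the \emph{end} of iteration $i$ every node still residing in OPEN has $f$-value at least $f_{min}^i$. Indeed, $f_{min}^{i+1}$ is by definition the minimum $f$-value over OPEN at the start of iteration $i+1$, and OPEN is not altered between iterations, so this minimum is taken over precisely the OPEN left at the end of iteration $i$; the claimed bound then gives $f_{min}^{i+1}\ge f_{min}^i$ for every $i$, i.e. the whole (finite) sequence is non-decreasing. Degenerate cases are harmless: whenever a further iteration runs, the \texttt{while}-condition forces $f_{min}^i<\infty$, and if the goal is returned at Line 13 of Algorithm~\ref{alg:mail-loop} that iteration is simply the last.

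To prove the reduction I would proceed by a case split on how OPEN is mutated during iteration $i$. At the instant $n$ is extracted we have $f_{min}^i=\min_{n'\in\textrm{OPEN}}f(n')$, so every node then in OPEN already satisfies $f\ge f_{min}^i$. The only subsequent changes to OPEN are: (i) $n$ is removed — harmless, removal cannot create a smaller minimum; (ii) $n$ is re-inserted, at Line 9 or at Line 24; (iii) for some $n'\in\textrm{OPEN}$ with $los(n,n')=true$ the values $g_{low}(n')$ and $f(n')$ are lowered at Lines 19--21. No other line touches the $f$-value of a node that ends the iteration inside OPEN, so this list is exhaustive. For (ii): when $n$ is re-inserted its recorded $f$-value is $g_{low}(n)+h(n)$ for the \emph{current} $g_{low}(n)$, and by Lemma~\ref{lemma:g_low_non_decreasing} this is $\ge g_{low}^i(n)$, whence $f(n)\ge g_{low}^i(n)+h(n)=f_{min}^i$. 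For (iii): reaching Lines 19--21 forces \textit{NewBestPotentialParentExists}$(n)$ to have returned \emph{false}, so — from Line 1 of Algorithm~\ref{alg:newbppexists} together with the fact that its decreasing branch never fired, and since $g_{low}(n)$ is not touched again in that branch — we have $g_{low}(n)=g(n)$ there, a value that coincides with $n$'s end-of-iteration $g_{low}$ and hence is $\ge g_{low}^i(n)$ by Lemma~\ref{lemma:g_low_non_decreasing}; combining this with the triangle inequality for $h$ in the form $h(n)\le h(n,n')+h(n')$ gives $f(n')=g(n)+h(n,n')+h(n')\ge g(n)+h(n)\ge g_{low}^i(n)+h(n)=f_{min}^i$. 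This settles all three cases and the reduction.

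The step I expect to be the main obstacle is case (iii). One has to identify exactly which branch of the main loop is executing when Lines 19--21 run, in order to be sure that at that moment $g_{low}(n)=g(n)$ and that this common value has not been driven below $g_{low}^i(n)$ — this is precisely where Lemma~\ref{lemma:g_low_non_decreasing} (and, through its proof, Lemma~\ref{lemma:g_low-g_n}) carries the argument, playing the role of heuristic consistency in the textbook proof. Two secondary points also need care: stating explicitly the orientation of the triangle inequality on $h$ that is actually invoked here, namely $h(n)\le h(n,n')+h(n')$ (equivalently $h(\cdot,\cdot)$ with the goal in the last slot), and verifying that the three-way split in (i)--(iii) over mutations of OPEN within one iteration is genuinely complete.
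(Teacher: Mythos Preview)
Your proof is correct and follows essentially the same route as the paper's: a case split on the mutations to OPEN during iteration $i$, with Lemma~2 handling the re-insertion of $n$ and the triangle inequality for $h$ together with $g(n)\ge g_{low}^i(n)$ handling the updates to other $n'$ when $n$ moves to CLOSED. The only organizational differences are that the paper treats the two re-insertion sites (Lines~9 and~24) separately---arguing Line~24 via the failed check at Line~11 rather than via Lemma~2---and, in the CLOSED branch, obtains $g(n)\ge g_{low}^i(n)$ by appealing to Lemma~1 directly rather than through $g_{low}(n)=g(n)$ and Lemma~2 as you do.
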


\begin{proof}
First, recall that all nodes of the search space are generated during the initialization (Algorithm~\ref{alg:init}) so no new nodes can be encountered during the search. Next, observe that at any iteration of the main loop, denote it $i$, a single node is extracted from OPEN -- the one with the minimum $f$-value equal to $f_{min}^i$, and can be put either back to OPEN (lines 9  and 24 of the Algorithm~\ref{alg:mail-loop}) or to CLOSED (with the possibly updated $f$-value). Let's consider first the cases when $n$ is put back to OPEN.

\textit{Case 1 (Lines 8-10)}. In this case, \textit{NewBestPotentialParent(n)} returned true. This means that the $g_{low}$-value of $n$ was altered. However, due to (Lemma 2) this updated $g_{low}$-value can not be less than $g_{low}^i$, i.e. the $g_{low}$-value that was used for computing $f_{min}^i$.
This infers that $n$ was put back to OPEN with the larger (or equal) $f$-value and now OPEN contains nodes whose $f$-values are greater or equal $f_{min}^i$.

\textit{Case 2 (Line 24)}. In this case, \textit{NewBestPotentialParent(n)} returned false which means that there exists no node in CLOSED that can be used as a parent for $n$ to potentially decrease $g(n)$. Moreover, the check at line 11 returned false, which means that $g(n)+h(n) > f_{min} \geq f_{min}^i$, where $f_{min}$ is the lowest $f$-value in OPEN currently. As Algorithm~\ref{alg:newbppexists} has set the $g_{low}$-value of $n$ to be $g(n)$ before this check we can infer that now (at line 24) $g_{low}(n)+h(n) > f_{min}$, thus $n$ is inserted back to OPEN with the $f$-value strictly exceeding $f_{min}$. Thus at the next iteration, a node with an $f$-value that is greater or equal to $f_{min}^i$ will be extracted.

Finally, let's consider the case when $n$ was retrieved from OPEN and put to CLOSED (lines 11-22). In that case the $f$-values of some nodes residing in OPEN might be updated (line 22). Let $n'$ be such a node. Its $f$-value now equals (see lines 19 and 21) $f(n')=g(n) + h(n, n') + h(n')$. Due to the consistency of $h$ we infer that $f(n') \geq g(n) + h(n)$. Moreover, as Lemma~1 prescribes $g(n) + h(n) \geq g_{low}(n)+h(n) = f_{min}^i$. Overall $f(n') \geq f_{min}^i$. Thus no updated node in OPEN has an $f$-value that is lower than $f_{min}^i$.

This concludes the proof, as we have shown that no matter whether we put the node $n$ back to OPEN or to CLOSED, at the end of the iteration the minimum $f$-value in OPEN is greater or equal to $f_{min}^i$.

\end{proof}

\begin{lemma}
CLOSED contains nodes for which the time-optimal path from the start node is known.
\end{lemma}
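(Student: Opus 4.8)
The plan is to prove, by induction on the iterations of the main loop (Algorithm~\ref{alg:mail-loop}), the stronger invariant: \emph{for every node $m$ in CLOSED, $g(m)$ equals the cost $C^*(m)$ of a time-optimal feasible plan from $start$ to $m$ (with $C^*(m)=\infty$ when no feasible plan exists), and following $parent$-pointers from $m$ back to $start$ reconstructs such a plan.} The base case is immediate: after Algorithm~\ref{alg:init}, $\text{CLOSED}=\{start\}$, $g(start)=0$, and the reconstructed plan is empty. For the inductive step it suffices to treat the node $n$ that is moved to CLOSED at Line~12 of the current iteration, since the $g$- and $parent$-values of every other CLOSED node are never touched again ($g$ is altered only at Line~7, for the node extracted at that iteration, while Line~19 changes only $g_{low}$).

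For the direction $g(n)\ge C^*(n)$ (the reconstructed plan is feasible) I would first record a side fact: whenever $parent(n)\neq\texttt{null}$ we have $parent(n)\in\text{CLOSED}$. Indeed $parent(n)$ is assigned the current $bpp(n)$ at Line~7, and $bpp(n)$ is always one of $start$, a node just inserted into CLOSED (Line~20), an element of $\text{PARENTS}(n)$, or the former $parent(n)$; since $\text{PARENTS}(n)\subseteq\text{CLOSED}$ (nodes enter PARENTS-lists only at Line~17, on being CLOSED), a short induction gives the claim. As CLOSED nodes keep their $g$-value permanently, the call \textit{ValidateTransition}$(n,bpp(n))$ at Line~5 produced $g\_new=g(bpp(n))+c$ from the arrival time $g(bpp(n))=C^*(bpp(n))$ along a collision-free move landing inside the safe interval of $n$; prepending the plan witnessing $C^*(bpp(n))$ gives a feasible plan to $n$ of cost exactly $g(n)$. (If $g(n)=\infty$, the next paragraph's argument forces $C^*(n)=\infty$ and the invariant holds vacuously.)

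The crux, and the step I expect to be the main obstacle, is the reverse inequality $g(n)\le C^*(n)$. I would argue by contradiction: suppose $C^*(n)<g(n)$ at the moment $n$ is to enter CLOSED. Fix a time-optimal feasible plan $\pi$ to $n$ and write it as a chain of search nodes $start=m_0,m_1,\dots,m_\ell=n$ in which every step is a single move (so $los(m_i,m_{i+1})=true$) and each $m_{i+1}$ is entered within its safe interval; since $\pi$ is optimal, each prefix is a time-optimal plan to its endpoint, so the prefix to $m_i$ costs $C^*(m_i)$. Let $m_j$ be the last of these nodes currently in CLOSED ($m_0=start$ qualifies). By the induction hypothesis $g(m_j)=C^*(m_j)$, and I would show that $g_{low}(m_{j+1})\le C^*(m_{j+1})$ holds at the present iteration: when $m_j$ entered CLOSED the loop at Lines~16--22 added $m_j$ to $\text{PARENTS}(m_{j+1})$ and set $g_{low}(m_{j+1})\le g(m_j)+h(m_j,m_{j+1})\le C^*(m_j)+(\text{cost of the }\pi\text{-move }m_j\to m_{j+1})=C^*(m_{j+1})$, using admissibility of $h$ on that move; and any later extraction of $m_{j+1}$ before the current iteration either re-derives this bound inside \textit{NewBestPotentialParentExists} (which again finds $m_j\in\text{PARENTS}(m_{j+1})$) or, through the Line~23--24 branch, forces $g(m_{j+1})\le C^*(m_{j+1})$ outright, while Lemmas~1 and~2 ensure $g_{low}(m_{j+1})$ is never pushed above $C^*(m_{j+1})$ in between (the only raise is to $g(m_{j+1})$, which in those situations is itself $\le C^*(m_{j+1})$). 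Hence $f(m_{j+1})=g_{low}(m_{j+1})+h(m_{j+1})\le C^*(m_{j+1})+h(m_{j+1})$, and combining consistency of $h$ with $h(m_{j+1},n)\le C^*(n)-C^*(m_{j+1})$ yields $f(m_{j+1})\le C^*(n)+h(n)$. If $m_{j+1}\neq n$, then $m_{j+1}$ is still in OPEN when $n$ passes the test at Line~11, so $g(n)+h(n)\le f(m_{j+1})\le C^*(n)+h(n)$, contradicting $C^*(n)<g(n)$. If $m_{j+1}=n$ — so $n$'s optimal predecessor $m_j$ is already CLOSED — then either $m_j\in\text{PARENTS}(n)$, so \textit{NewBestPotentialParentExists} returns true (as $g(m_j)+h(m_j,n)\le C^*(n)<g(n)$) and $n$ is not CLOSED at this iteration at all; or $m_j$ was $bpp(n)$ at some extraction of $n$, where \textit{ValidateTransition}$(n,m_j)$ returned $g(m_j)+(\text{cost of the }\pi\text{-move }m_j\to n)=C^*(n)$ and Line~7 had already reduced $g(n)$ to $C^*(n)$ — again a contradiction. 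Therefore $g(n)=C^*(n)$, and by the first direction its $parent$-chain witnesses a time-optimal plan, re-establishing the invariant. The two delicate points, where I expect most of the work to lie, are the bookkeeping that keeps $g_{low}(m_{j+1})$ a genuine lower bound on $C^*(m_{j+1})$ across all intermediate handling of $m_{j+1}$, and the boundary case $m_{j+1}=n$.
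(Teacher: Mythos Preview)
Your argument is correct, and it establishes both directions of $g(n)=C^*(n)$ explicitly (the paper leaves the easy direction $g(n)\ge C^*(n)$ implicit). However, the route you take for the hard direction is genuinely different from the paper's.

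The paper does not look at an optimal plan $\pi$ at all. Instead it argues directly from the two facts that hold when $n$ enters CLOSED: (i) \textit{NewBestPotentialParentExists}$(n)$ returned \texttt{false}, which together with the bookkeeping on PARENTS-lists shows that every $n'\in$ CLOSED with $los(n',n)=true$ has already had the transition $n'\to n$ validated and cannot lower $g(n)$; and (ii) the Line~11 test passed, so for every $n'\in$ OPEN one has $g(n)+h(n)\le g_{low}(n')+h(n')\le g_{low}(n')+h(n,n')+h(n)$, whence $g(n)\le g_{low}(n')+h(n,n')$. Since $g_{low}(n')$ lower-bounds any future $g(n')$ and Lemma~3 guarantees the minimum $f$-value in OPEN never decreases, no OPEN node can ever be used to improve $g(n)$ either. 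This is shorter because it never has to track $g_{low}(m_{j+1})$ across multiple extractions and reinsertions of $m_{j+1}$.

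Your approach instead mirrors the classical A* optimality proof: pick the last CLOSED node $m_j$ on an optimal plan and argue about $m_{j+1}$. The price is exactly the ``delicate point'' you flag: you must verify that the bound $g_{low}(m_{j+1})\le C^*(m_{j+1})$ survives every intermediate extraction of $m_{j+1}$, splitting into whether $m_j$ is still in PARENTS$(m_{j+1})$ or was already consumed as $bpp$. Your case analysis there does go through (in particular your ``Line~23--24'' case really says: if \textit{NewBestPotentialParentExists} returned \texttt{false} while $m_j\in$ PARENTS$(m_{j+1})$, then $g(m_{j+1})\le g(m_j)+h(m_j,m_{j+1})\le C^*(m_{j+1})$), but it would read more cleanly if you stated it as the two-case split on whether $m_j$ was ever $bpp(m_{j+1})$, rather than on which branch of the main loop was taken. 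One small point worth making explicit: your claim that ``each prefix of $\pi$ is time-optimal'' is a WLOG that relies on the SIPP dominance property (arriving earlier within the same safe interval is never worse), not a property of every optimal plan per se.
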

\begin{proof}

We prove by induction that before each iteration of the main loop CLOSED contains nodes for which the time-optimal path from the start is known.

\textit{k=1. Base case.} Before the first iteration CLOSED contains only one node, $start$, for which the statement obviously holds.

\textit{k=m. Induction hypothesis.} Assume that before the $m$-th iteration of the main loop the statement holds.

\textit{k=m+1. Induction step.} We prove now that after the $m$-th iteration of the main loop (i.e. before the $(m+1)$-th iteration) for all nodes in CLOSED the time-optimal paths from the $start$ are known.

Observe that only a single node might be added to CLOSED at a single iteration of the main loop -- the one extracted from OPEN. If this node was not added at the $m$-th iteration, the claim holds due to the induction hypothesis. Assume now that the node $n$ was extracted from OPEN and added to CLOSED at the $m$-th iteration. We need to show that the time-optimal path from $start$ to $n$ is known, which is equivalent to showing that no other node in the entire search space can be used as a parent of $n$ to decrease the cost of the path from $start$ to $n$, which is $g(n)$.

Note first that if $n$ was inserted to CLOSED then \textit{NewBestPotentialParentExisits(n)} function returned false (otherwise the algorithm won't reach Line 12 due to \texttt{continue} at Line 10). In order to return false this function has to iterate through all nodes $n'$ in PARENTS($n$) and for each such node verify that $g(n') + h(n, n') \geq g_{low}(n)=g(n)$. Recall now that \textit{i}) the $g$-values of all nodes residing in PARENTS($n$), can not be decreased due to the induction hypothesis, and \textit{ii}) $h$ is a consistent heuristic. This means that no node from PARENTS($n$), which is a subset of CLOSED, can be used to decrease $g(n)$. Transitioning to $n$ from any other node $n'$ in CLOSED, which is not part of PARENTS($n$), can not provide a better value for $g(n)$ than it is at the considered iteration, because the transitions $n' \rightarrow n$ have already been considered at previous iterations (due to we always add a node that became consistent to the list of potential parents of all other reachable nodes in OPEN).

We will show now that no node from the OPEN part of the state-space can be used to decrease $g(n)$ as well. Observe that if $n$ was inserted to CLOSED then the check at Line 11 was true, which means that $g(n) + h(n) \leq g_{low}(n_{best}) + h(n_{best})$, where $n_{best}$ is the node with the lowest $f$-value in OPEN. For any other node $n'$ residing in OPEN it, obviously, holds that $g_{low}(n_{best})+h(n_{best}) \leq g_{low}(n') + h(n')$. Due to the consistency of $h$ we have: $g_{low}(n') + h(n') \leq g_{low}(n') + h(n,n') + h(n)$. Thus, for any node $n'$ in OPEN it holds that: $g(n) + h(n) \leq  g_{low}(n') + h(n,n') + h(n)$ which is equivalent to $g(n) \leq g_{low}(n') + h(n,n')$. The latter guarantees that no node from OPEN can be used to decrease $g(n)$ at the current iteration of the main loop. Moreover, as the $f$-value of the best element in OPEN is not decreasing from iteration to iteration (Lemma 3), we infer that $g(n)$ can not be lowered down anymore at any future iteration as well. This concludes the proof.
\end{proof}

\begin{lemma}
When a node $n$, s.t. $bpp(n)=parent(n)$, is extracted from OPEN then it will be added to CLOSED at the current iteration.
\end{lemma}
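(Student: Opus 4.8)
The plan is to trace what Algorithm~\ref{alg:mail-loop} does to $n$ during the current iteration (call it iteration $i$) and show it is forced to fall through to Line 12. The only two ways $n$ could \emph{avoid} going into CLOSED are: \textit{NewBestPotentialParentExists}($n$) returns true at Line 8 (then $n$ is re-inserted at Line 9), or the test at Line 11, $g(n)+h(n)\le\min_{m\in\mathrm{OPEN}}f(m)$, fails (then $n$ is re-inserted at Line 24). So I would prove (a) the Line-8 call returns false, and (b) the Line-11 test holds; together these force $n$ into CLOSED at Line 12.

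I would set up a few facts first. Since $n$ is popped with a finite $f$-value (the \texttt{while} guard), it has finite $g_{low}(n)$ and hence $bpp(n)\ne\texttt{null}$; with the hypothesis this makes $parent(n)=bpp(n)\ne\texttt{null}$, so $g(n)<\infty$ and $g(n)$ was assigned by some earlier execution of Line 7. Inspecting every write to the $bpp$- and $parent$-pointers (the initialization; Lines 1 and 7 of Algorithm~\ref{alg:newbppexists}; Lines 7 and 20 of Algorithm~\ref{alg:mail-loop}), one checks by a routine induction that any non-null value ever taken by $bpp(n)$ or $parent(n)$ is a node in CLOSED (treating $start$ as being in CLOSED from the outset); by Lemma 4, and because $g$ is rewritten only at Line 7 for the node just popped, the $g$-value of such a node is frozen. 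Consequently \textit{ValidateTransition}($n,bpp(n)$) at Line 5 — a deterministic function of the two nodes, the fixed obstacle trajectories, and $g(bpp(n))=g(parent(n))$ — reproduces exactly the value last written into $g(n)$ at Line 7, i.e. $g\_new=g(n)$; so the Line-6 test fails and $g(n),parent(n)$ are untouched. Also, since $parent(n)\notin\mathrm{PARENTS}(n)$, Line 4 does nothing, and nothing in Lines 2--10 alters OPEN beyond removing $n$.

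For (a) I would first argue that at the moment of extraction $g_{low}(n)=g(n)$ and $g(n)\le g(n')+h(n,n')$ for every $n'\in\mathrm{PARENTS}(n)$. The key point is that, scanning the assignments, the situation ``$n$ sits in OPEN with $bpp(n)=parent(n)$'' can only be produced by a \textit{NewBestPotentialParentExists} call that returned false — which sets $g_{low}(n):=g(n)$ and leaves $bpp(n)=parent(n)$ at its Line 1 — followed by re-insertion at Line 24 of Algorithm~\ref{alg:mail-loop}; every other write to $bpp$ yields a pointer $\ne parent(n)$. Returning false means that at that earlier iteration every $n'\in\mathrm{PARENTS}(n)$ satisfied $g(n')+h(n,n')\ge g(n)$. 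Afterwards $n$ is not extracted until iteration $i$, so $g(n)$ is constant; and although $\mathrm{PARENTS}(n)$ may grow (Line 17), any newly added $m$ with $g(m)+h(n,m)<g(n)$ would have triggered Lines 18--21 and overwritten $bpp(n)$ with $m\ne parent(n)$, contradicting the hypothesis. Hence both inequalities hold at extraction, so $f(n)=g_{low}(n)+h(n)=g(n)+h(n)$; and the Line-8 call, which restarts $g_{low}(n)$ at the (still unchanged) $g(n)$ and then tests $g(n')+h(n,n')<g_{low}(n)$ over $\mathrm{PARENTS}(n)$, finds no improvement and returns false. For (b): $n$ was popped as the $f$-minimum of OPEN with $f(n)=g(n)+h(n)$, so every other member of OPEN has $f$-value $\ge g(n)+h(n)$; since OPEN is untouched between Line 2 and Line 11 (by the previous paragraph), $\min_{m\in\mathrm{OPEN}}f(m)\ge g(n)+h(n)$ still holds at Line 11, the test passes, and $n$ enters CLOSED at Line 12 (and if $n=goal$ the path is returned).

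I expect the crux to be the pointer bookkeeping in the third paragraph: enumerating \emph{every} place $bpp$ and $parent$ can be rewritten, verifying that ``$bpp(n)=parent(n)$ while $n\in\mathrm{OPEN}$'' can only originate from a false-returning \textit{NewBestPotentialParentExists} together with a Line-24 re-insertion, and confirming that a later growth of $\mathrm{PARENTS}(n)$ cannot quietly invalidate $g(n)\le g(n')+h(n,n')$ without also changing $bpp(n)$. The remaining ingredients — the deterministic-re-evaluation argument giving $g\_new=g(n)$, and the $f$-minimum argument for (b) — are routine once those invariants are in hand.
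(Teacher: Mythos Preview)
Your proof is correct, and its overall skeleton matches the paper's: show that $g(n)$ is not changed at Line~7, that \textit{NewBestPotentialParentExists} returns false, and that the Line~11 test passes. The paper, however, arrives at part~(a) far more cheaply. Having established (as you also do) that $g_{low}(n)=g(n)$ at extraction, it simply invokes Lemma~2: if \textit{NewBestPotentialParentExists} returned true then $g_{low}(n)$ would drop strictly below its extraction value $g_{low}^i(n)=g(n)$, contradicting $g_{low}^i(n)\le g_{low}(n)$. That is a one-line argument, whereas you reconstruct the full provenance of the state $bpp(n)=parent(n)$, argue it can only arise from a prior false-returning call followed by Line~24 re-insertion, and then maintain the invariant $g(n)\le g(n')+h(n,n')$ for every $n'\in\mathrm{PARENTS}(n)$ across subsequent Line~17 insertions. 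Your route is more self-contained and makes explicit several facts the paper glosses over (e.g.\ why $bpp(n)=parent(n)$ forces $g_{low}(n)=g(n)$, why $parent(n)\notin\mathrm{PARENTS}(n)$), but since Lemma~2 is already available you could replace the entire second half of your part~(a) with a single appeal to it. Your argument for~(b) is essentially identical to the paper's.
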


\begin{proof}

Let $n$ be such a node. It holds at the beginning of the iteration that $g(n)=g_{low}(n)$ (as  $bpp(n)=parent(n)$). $g\_new$ computed at Line 5 will be equal to $g(n)$ thus the latter will be unaltered (and will stay equal to $g_{low}$). \emph{NewBestPotentialParentExists} will return false as otherwise $g_{low}(n)$ would have been decreased which contradicts Lemma 2. Consequently, we proceed to the check at Line 11 which will be passed as $g(n)$ was not altered at this iteration. Thus, $n$ will be added to CLOSED.

\end{proof}

\begin{lemma}
Every reachable node in the search-space will be eventually added to CLOSED in case the goal detection block is omitted (i.e. Lines 13-14 are removed from the Algorithm~\ref{alg:mail-loop}) and after the finite number of iterations the algorithm will terminate.
\end{lemma}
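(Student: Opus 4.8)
The claim has two parts --- that the main loop performs only finitely many iterations, and that every reachable node is eventually placed in CLOSED --- and the plan is to establish termination first and then bootstrap completeness from it. Before either, I would record four structural facts by inspecting the pseudocode. (i) The set of search nodes is finite, say of size $N$, because each of the finitely many vertices carries finitely many safe intervals. (ii) Once a node enters CLOSED it never leaves and is never re-inserted into OPEN: in Algorithm~\ref{alg:mail-loop} a node removed from OPEN is either re-inserted into OPEN (Lines 9 or 24) or put into CLOSED (Line 12), and CLOSED is otherwise never modified; so each node is closed at most once. (iii) The list $\mathrm{PARENTS}(n)$ receives $start$ once at initialization (Line 7 of Algorithm~\ref{alg:init}) and thereafter receives any other node at most once --- only Line 17 of Algorithm~\ref{alg:mail-loop} adds a node, and it fires only when that node becomes consistent, which by (ii) happens at most once --- so the total number of insertions into $\mathrm{PARENTS}(n)$ over the whole run is at most $N$. (iv) Whenever $n$ is extracted from OPEN, $bpp(n)$ is either $parent(n)$ or an element of $\mathrm{PARENTS}(n)$, and the two cases are mutually exclusive because $parent(n)\notin\mathrm{PARENTS}(n)$ by construction.

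For termination I would bound how many times a fixed node $n$ can be extracted from OPEN. If $n$ is extracted with $bpp(n)=parent(n)$, then by Lemma~5 it is added to CLOSED at that iteration, and by (ii) it is never extracted again; so this occurs at most once. If $n$ is extracted with $bpp(n)\in\mathrm{PARENTS}(n)$, then Line 4 of Algorithm~\ref{alg:mail-loop} removes $bpp(n)$ from $\mathrm{PARENTS}(n)$ permanently (it could only be re-added via Line 17, which by (iii) will not re-add an already-consistent node), consuming one of the at-most-$N$ lifetime insertions into $\mathrm{PARENTS}(n)$; so this occurs at most $N$ times. Hence $n$ is extracted at most $N+1$ times, so the main loop runs at most $N(N+1)$ iterations; since each iteration extracts exactly one node and the loop runs precisely while OPEN contains a node of finite $f$-value, after at most $N(N+1)$ iterations no such node remains and the loop exits.

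For completeness I would argue by induction along a feasible plan. Let $n$ be reachable, fix a feasible plan reaching it, and let $start=n_0,n_1,\dots,n_m=n$ be the node sequence it induces; every consecutive move is valid, so $los(n_j,n_{j+1})=true$, and since the optimal arrival time $g^*(n_j)$ and the plan's arrival time at $n_j$ lie in the \emph{same} safe interval of $n_j$, the move $n_j\to n_{j+1}$ remains feasible when the agent reaches $n_j$ at time $g^*(n_j)$ (pad with a wait inside that interval). I claim every $n_j$ is eventually closed; by Lemma~4 this also gives $g(n_j)=g^*(n_j)$ at closing time. The base case $n_0=start$ holds by initialization. For the step, assume $n_j$ is closed with $g(n_j)=g^*(n_j)$; if $n_{j+1}$ is not already closed it is in OPEN at that iteration, so Lines 15--22 insert $n_j$ into $\mathrm{PARENTS}(n_{j+1})$ and set $g_{low}(n_{j+1})\le g^*(n_j)+h(n_j,n_{j+1})<\infty$. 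I would then show $n_{j+1}$ keeps a finite $f$-value forever after: any later call to \emph{NewBestPotentialParentExists} resets $g_{low}(n_{j+1})$ to $g(n_{j+1})$ (Line 1 of Algorithm~\ref{alg:newbppexists}) and only decreases it afterwards, and either $n_j$ is still in $\mathrm{PARENTS}(n_{j+1})$ --- so the scan restores $g_{low}(n_{j+1})\le g^*(n_j)+h(n_j,n_{j+1})<\infty$ --- or $n_j$ was removed from it at an extraction with $bpp(n_{j+1})=n_j$, at which point ValidateTransition returned the finite arrival time of the optimal-time move (feasible since $g(n_j)=g^*(n_j)$), so $g(n_{j+1})$ became finite and, being non-increasing, stays finite (and $g_{low}\le g$ by Lemma~1). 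A node that permanently sits in OPEN with finite $f$-value contradicts the termination established above, since the loop halts only when OPEN has no finite-$f$ node; hence $n_{j+1}$ must be moved to CLOSED before termination. This closes the induction, so $n=n_m$ is closed.

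I expect the main obstacle to be exactly the maintenance of a finite $f$-value in the completeness step: the reset $g_{low}(n):=g(n)$ at the top of \emph{NewBestPotentialParentExists} can transiently make $g_{low}(n)$ infinite whenever $g(n)$ has not yet been lowered, so the argument must carefully split on whether the good potential parent still lies in $\mathrm{PARENTS}(n)$ and, if it does not, certify that the corresponding transition was actually validated with finite cost. A supporting delicate point, needed to make that certification go through, is the modeling observation that $g^*(n_j)$ and any other feasible arrival at node $n_j$ fall in a common safe interval, so padding with a wait action preserves feasibility of the next move --- this is precisely what forces ValidateTransition$(n_{j+1},n_j)$ to return a finite value once $n_j$ is closed. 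By contrast the termination bound should be routine once facts (ii)--(iv) and Lemma~5 are in hand.
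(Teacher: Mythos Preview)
Your proof is correct and follows essentially the same line as the paper's. Termination is argued identically: each consistent node is inserted at most once into any $\mathrm{PARENTS}(n)$, every extraction of $n$ with $bpp(n)\neq parent(n)$ consumes one such insertion, and the remaining case is discharged by Lemma~5. For completeness the paper runs the same argument backward (if a reachable $n$ were never closed, neither was its predecessor on a valid plan, recursing down to $start$), whereas you do the equivalent forward induction along the plan; your version is in fact a bit more careful than the paper's, since you explicitly justify---via the shared safe interval and a padding wait---why \textit{ValidateTransition}$(n_{j+1},n_j)$ must return a finite value once $n_j$ is closed with $g(n_j)=g^*(n_j)$, a point the paper leaves implicit.
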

\begin{proof}

We first show that the algorithm terminates after the finite number of iterations. Recall that all nodes in the search space are generated during initialization. Each node may be added to CLOSED only once. Hence, it may be added to the PARENT-list of any other node only once. Thus, the number of potential parents for node $n$ is limited by $N-1$, where $N$ is the total number of nodes. Every time $n$ extracted from OPEN one of the nodes from PARENTS($n$) is removed, except the case when $n$ is extracted with $bpp(n)=parent(n)$. However, in this case, such iteration leads to removing $n$ from OPEN and adding it to CLOSED (Lemma 5). Thus, the number of the iterations of the main loop at which $n$ was extracted with non-empty PARENTS$(n)$ is finite. Therefore, the maximum number of extractions of a single node from OPEN is finite. 

We now show that after the finite number of extractions of $n$ from OPEN it is either kept in OPEN with $f(n)=\infty$ or is put in CLOSED. To show that we consider separately two possibilities.

\textit{Case 1}. $g(n)$ is not set to a finite value every time $n$ is extracted from OPEN (meaning that the transition from $bpp(n)$ is not valid). Think of the iteration when PARENTS($n$) becomes empty for good (this will, indeed, happen as shown above). At this iteration $g_{low}(n)$ is set to $g(n)$ which is infinite (at Line 1, Algorithm~\ref{alg:newbppexists}). It infers that $n$ is put to OPEN with $f(n)=\infty$.

\textit{Case 2}. $g(n)$ becomes finite at some iteration. It means that $parent(n)$ has been identified. It guarantees that $bpp(n)$ may not become \texttt{null} at any further iteration. Thus, even if PARENTS($n$) becomes empty (or contains only the nodes that can not decrease the $g$-value anymore) $n$ will be extracted from OPEN with $bpp(n)=parent(n)$ and added to CLOSED (Lemma 5).

Thus, every node $n$ after the finite number of iterations will be either in CLOSED or in OPEN with $f(n)=\infty$. In both considered cases the algorithm will terminate due to the \texttt{while} condition at Line 1 of the main loop (assuming that $min$ of the empty set returns $\infty$).

Assume now that after the termination of the algorithm some reachable node $n$ was not added to CLOSED, meaning it still resides in OPEN. In case its $f$-value is finite this is impossible as it contradicts with the \texttt{while} condition at Line 1, Algorithm~\ref{alg:mail-loop}. Assume now that $f(n)$ is infinite. The only reason $n$ may reside in OPEN, in that case, is that no suitable parent was identified that delivers a finite $g$-value for $n$. 
However, we know that at least one such node exists (the one that proceeds $n$ in a valid path from $start$ which we assume is existent). Denote such node $n'$. The only reason $n'$ has not become a parent for $n$ maybe that $n'$ was not added to CLOSED (as otherwise $n'$ would have been added to PARENTS($n$) and the transition $n' \rightarrow n$ would have been considered at some iteration). Recursively applying the same reasoning to $n'$ we will find that $start$ is not added to CLOSED which leads to the contradiction as $start$ is added to CLOSED during the initialization (Line 17, Algorithm~\ref{alg:init}).

\end{proof}

\begin{theorem}
iTO-AA-SIPP is complete and optimal.
\end{theorem}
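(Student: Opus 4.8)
The plan is to derive the theorem directly from Lemmas~4 and~6, which already carry the technical weight; what remains is to phrase optimality as an instance of Lemma~4 applied to the goal node, and completeness as an instance of Lemma~6, taking a little care about the goal-detection block.

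For optimality, I would argue as follows. Suppose \itoaasipp returns at Line~14, i.e.\ it reconstructs a path from the parent pointers of $goal$. Returning there occurs exactly at the iteration when $goal$ is extracted from OPEN and passes the check at Line~11 — precisely the moment $goal$ would be inserted into CLOSED. By Lemma~4, at that point $g(goal)$ equals the cost of the time-optimal feasible plan from $start$ to $goal$. It then remains to check that \textit{ReconstructPathFromParents}$(goal)$ actually yields a feasible plan of cost $g(goal)$: each pointer $parent(m)$ was assigned at Line~7 together with a successful \textit{ValidateTransition}, so $g(m)=g(parent(m))+c(parent(m),m)$ for a collision-free transition, and $parent(m)$ is a closed node with a stable (optimal) $g$-value by the time $g(m)$ became finite; chaining these equalities down to $start$ (with $g(start)=0$) exhibits a feasible plan whose cost is exactly $g(goal)$. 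Hence the returned plan is time-optimal.

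For completeness, I would split into two cases. If a feasible plan from $start$ to $goal$ exists, then $goal$ is a reachable node. The goal-detection block (Lines~13--14) does not affect the execution trace on any iteration before the one at which $goal$ is extracted and passes the Line~11 test, so the run coincides with that of the block-free variant up to that point; by Lemma~6 the block-free variant adds every reachable node — in particular $goal$ — to CLOSED after finitely many iterations, so in the actual algorithm that same iteration fires Line~14 and returns a plan, which by the optimality argument above is time-optimal. If no feasible plan exists, then $goal$ never receives a finite $g$-value, so Lines~13--14 never trigger; by Lemma~6 the algorithm nonetheless halts after finitely many iterations, and it can only halt through the \texttt{while}-condition at Line~1 failing, i.e.\ all nodes in OPEN having $f=\infty$, whereupon it correctly returns \textit{`path not found'} at Line~25.

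The routine part is the bookkeeping that the parent chain encodes a genuine feasible plan of cost $g(goal)$; the only mild subtlety is the observation that inserting the goal-detection block does not alter the execution before $goal$ is closed, which is what lets us import Lemma~6 unchanged into the completeness argument. Everything substantive has already been established in Lemmas~1--6, so the theorem follows by assembling these pieces.
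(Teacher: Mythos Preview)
Your proposal is correct and follows essentially the same approach as the paper: optimality is read off from Lemma~4 applied to $goal$ at the moment it enters CLOSED, and completeness is obtained from Lemma~6 via the same two-case split. You add a bit more care than the paper does---explicitly noting that the goal-detection block does not perturb the execution prior to $goal$ being closed, and verifying that the parent chain really encodes a feasible plan of cost $g(goal)$---but these are refinements of the same argument rather than a different route.
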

\begin{proof}

\textit{Completeness}. To prove completeness we need to show that: \textit{1}) if a solution exists \itoaasipp will find it; \textit{2}) if there is no solution it will correctly terminate returning `path not found'.

\textit{Case 1}. If a solution exists then $goal$ is reachable and will be eventually added to CLOSED in accordance with Lemma 6. When this occurs, \itoaasipp will stop due to the check at Line 13, Algorithm~\ref{alg:mail-loop}. A valid path to $goal$ obtained from tracing back the parent-pointers will be returned (Line 14).

\textit{Case 2}. If no solution exists then $goal$ is not reachable and the stop criterion at Line 13 will never be met. Thus the behavior of \itoaasipp is analogous to the one established by Lemma 6. I.e. the main loop will be executed a finite number of times and the algorithm will terminate returning `path not found' (Line 25).

\textit{Optimality} is a direct corollary of Lemma 4.

\end{proof}

\begin{figure*}[t]
    \centering
    \includegraphics[width=\linewidth]{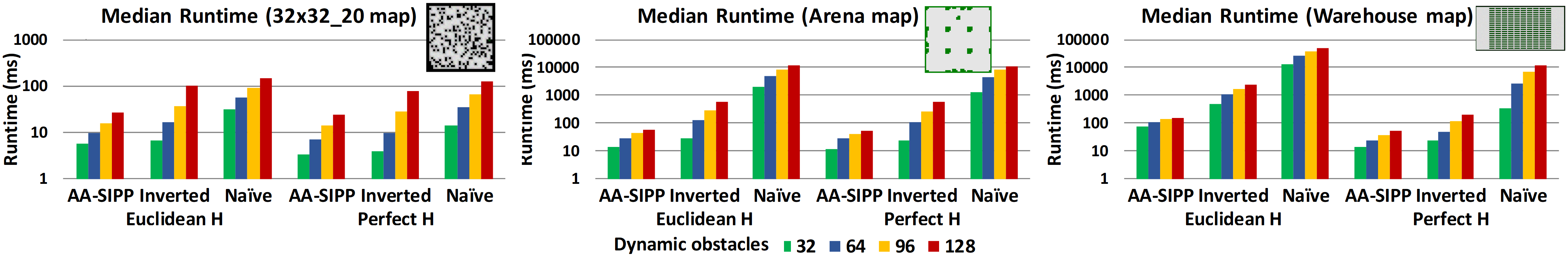}
    \caption{Median runtime of the algorithms (the scale is logarithmic).}
    \label{figExpRuntime}
\end{figure*}

\begin{table*}[t]
    \centering
    \fontsize{9}{10}\selectfont
    \renewcommand{\arraystretch}{1.2}
    \begin{tabular}{c|ccccc|ccccc}
        &\multicolumn{5}{c|}{Euclid H}&\multicolumn{5}{c}{Perfect H}\\
        \hline
         \makecell[t]{\# of\\obs.}&  \makecell[t]{AA-SIPP\\Iterations} & \makecell[t]{AA-SIPP\\VT-calls} & \makecell[t]{Inverted\\Iterations} & \makecell[t]{Na\"ive\\Iterations} & \makecell[t]{Na\"ive\\VT-calls}
         &  \makecell[t]{AA-SIPP\\Iterations} & \makecell[t]{AA-SIPP\\VT-calls} & \makecell[t]{Inverted\\Iterations} & \makecell[t]{Na\"ive\\Iterations} & \makecell[t]{Na\"ive\\VT-calls}\\
         \hline
         32 & 885 & 7 357 & 7 348 & 856 & 391 845 & 183 & 1 764 & 1 377 & 18 & 13 019\\
         64 & 883 & 8 704 & 11 953 & 871 & 461 357 & 218 & 2 345 & 1 998 & 84 & 53 488\\
         96 & 945 & 9 626 & 16 884 & 908 & 484 819 & 275 & 2 977 & 3 015 & 154 & 92 125\\
         128 & 991 & 10 299 & 24 088 &931 & 535 076 & 325 & 3 616 & 4 283 & 222 & 129 981\\
    \end{tabular}
    \caption{Median number of iterations and \textit{ValidateTransition} calls on the \texttt{warehouse} map.}
    \label{tab:Steps}
\end{table*}

\section{Empirical Evaluation}
\paragraph{Setup} We implemented \ntoaasipp and \itoaasipp and evaluated them on a range of different grid-maps with a varying number of dynamic obstacles\footnote{Our implementation and the experimental data are available at \texttt{github.com/PathPlanning/TO-AA-SIPP.}}. Non-optimal any-angle SIPP planner, AA-SIPP, which is a part of the prioritized multi-agent solver AA-SIPP(m)~\cite{yakovlev2017aasipp}, was also evaluated. The agent and the dynamic obstacles were represented as disks of radius 0.5 (of a grid cell) and their movement speed was 1.0.

Three different maps from MovingAI benchmark~\cite{sturtevant2012, stern2019} were used: \texttt{Arena} -- a $49 \times 49$ map composed of a small number of static obstacles and large open areas; \texttt{32x32\_20} -- a $32 \times 32$ map with 20\% of randomly blocked cells; \texttt{Warehouse} -- a $170 \times 84$ map from a logistics domain. We chose these maps as they represent different types of environments and differ in size.

500 scenarios were generated for each map. Each scenario contained the randomly chosen start and goal locations for the agent as well as 128 trajectories of the dynamic obstacles, that are collision-free and contain any-angle moves. These trajectories were obtained by invoking the prioritized multi-agent planner from~\cite{yakovlev2017aasipp}. During the evaluation, we used the scenarios as follows. We took the start-goal location pair for the agent from a scenario as well as the trajectories of the first 32-62-96-128 dynamic obstacles. Next, we ran all algorithms on such an instance and proceeded to the next scenario.

We ran the algorithms with two heuristics: Euclidean distance and a pre-computed perfect heuristic that takes static obstacles into account (denoted as \texttt{Perfect H}). To compute it we ran Dijkstra's algorithm backward from the goal node and at each iteration generated all possible any-angle successors (like in \ntoaasipp). 

\paragraph{Results} Median runtimes of the algorithms are presented in \figurename~\ref{figExpRuntime} (the Y-axis of the plots is logarithmic). As expected, the runtime of \ntoaasipp is the worst in all cases. The difference between \ntoaasipp and \itoaasipp is less pronounced for \texttt{32x32\_20} map but for the two other maps \itoaasipp is faster by one order of magnitude. Noteworthy that in certain setups, e.g. the \texttt{32x32\_20} map with the perfect heuristic and the small number of dynamic obstacles, the runtime of \itoaasipp is comparable of that of AA-SIPP, however, in general, the latter is evidently faster.

The impact of using the perfect heuristic depends largely on the topology of the map. A significant boost in performance (for all algorithms) is observed on the map with randomly blocked cells and the \texttt{Warehouse} map. On the \texttt{Arena} map Euclidean distance is a rather accurate and informative heuristic thus the advantage of \texttt{Perfect H} is less pronounced. 

Table~\ref{tab:Steps} shows the median number of iterations and \textit{ValidateTransition} calls for the \texttt{Warehouse} map (for the other maps we observed similar trends so we present a single table here for the sake of brevity). For \itoaasipp the number of iterations equals the number of VT-calls so no dedicated column for this is present. As one can note, the number of iterations for \ntoaasipp is quite similar to the one of AA-SIPP and is much less compared to \itoaasipp, however, the number of VT-Calls, which are quite computationally expensive, for \ntoaasipp exceeds hundreds of thousands. \itoaasipp uses one order of magnitude fewer VT-calls to find optimal solutions. The number of VT-calls for AA-SIPP is only slightly lower in most cases. However, recall that \itoaasipp performs, aside from validating the transition, such operations as finding the new best potential parent, adding a new parent for the nodes in OPEN when we add a node to CLOSED, etc. That is why the overall runtime of \itoaasipp is higher.

One can also note that the number of iterations for \ntoaasipp and AA-SIPP algorithms using Euclidean heuristic grows slowly with the number of dynamic obstacles. We believe that the main reason for this is that on the \texttt{warehouse} map Euclidean distance provides largely inaccurate cost-to-go estimates in most cases due to the prolonged static obstacles. Thus the algorithm has to expand lots of states no matter how many dynamic obstacles are present in the environment.

Indeed, both \ntoaasipp and \itoaasipp always found solutions of the same cost (being time-optimal solutions) which was less than or equal to the cost of the AA-SIPP solutions. The averaged difference between the costs is shown in \figurename~\ref{figExpCost}. It is known that in static environments the difference in cost between the solutions found by greedy any-angle algorithms (e.g. Theta*) and optimal ones (e.g. ANYA) is less than a percent on average \cite{harabor2016}. Here we observe similar figures. 
It is also noteworthy, that for some instances the cost-difference is much more pronounced. For example, the maximum differences on the \texttt{32x32\_20}, \texttt{Arena} and \texttt{Warehouse} maps were 73.9\%, 16.3\% and 5.2\% respectively. An example of two solutions of largely varying costs is shown in \figurename~\ref{fig:aasipp_vs_toaasipp} (the animation is available at \texttt{https://youtu.be/k245e3CMUO4}). AA-SIPP was not able to discover the transition framed in yellow, as to find it the algorithm had to first move from the source cell to one of the adjacent cells (shown with red crosses), but these moves were invalid due to the dynamic obstacles (not shown in the figure). \itoaasipp indeed discovered this move and used it to build an optimal plan.

\begin{figure}[t]
    \centering
    \includegraphics[width=0.8\columnwidth]{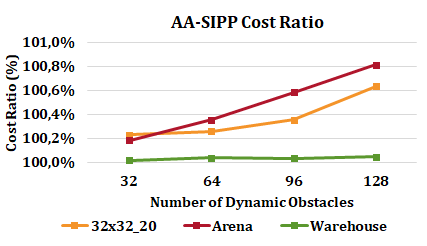}
    \caption{The ratio between optimal costs and costs of trajectories found by greedy AA-SIPP algorithm.}
    \label{figExpCost}
\end{figure}

\section{Related Works}
Two lines of research are the most relevant to this work. The first one deals with any-angle path finding in static environments and the second one -- with graph-based path planning in environments with dynamic obstacles. 

The most widely known algorithm for (sub-optimal) any-angle path finding in a static environment (represented as a grid) is apparently Theta*~\cite{nash2007}. This algorithm was modified and enhanced in numerous works~\cite{nash2010lazy, oh2016strict}, etc. Other prominent algorithms of that kind are Block A*\cite{yap2011any} and Field D*~\cite{ferguson2006using}. Closely related are the ones that rely on $2^k$-connectivity of the grid~\cite{rivera2020}. 

The first provably optimal any-angle path planner for static grids is ANYA \cite{harabor2016}. Another optimal planner was introduced in~\cite{vsivslak2009accelerated} but only a conjecture was made regarding its optimality. It is noteworthy that \itoaasipp shares some similarities with the planner introduced in that paper as the latter also reasons about the potential parents represented as the subset of the grid cells encountered during the search.

\begin{figure}[t]
    \centering
    \includegraphics[width=0.75\columnwidth]{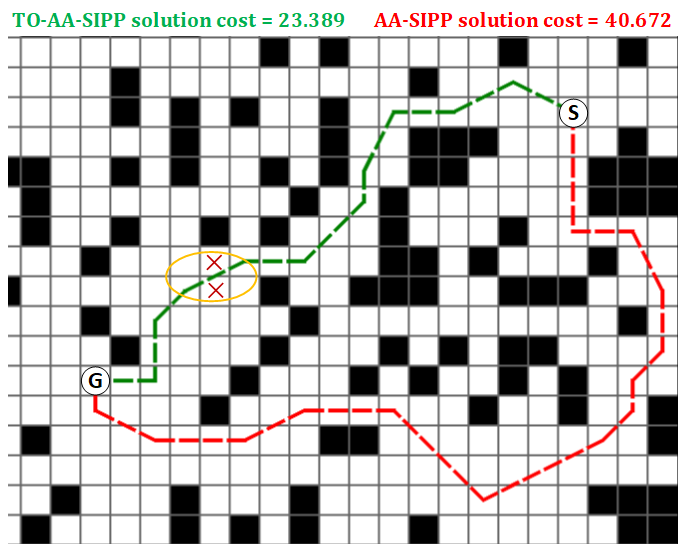}
    \caption{An instance for which the cost of AA-SIPP solution equals 173\% of the optimal cost (achieved by \itoaasipp). Dynamic obstacles are not shown.}
    \label{fig:aasipp_vs_toaasipp}
\end{figure}

The other line of research, related to this work, is graph-based path planning in environments with dynamic obstacles. In~\cite{likhachev2009planning} it was suggested to treat the dynamic obstacles as static and re-plan at high rate. In~\cite{silver2005} it was suggested to run A* with time-reservation table when one agent needs to avoid the trajectories of the other agents (which can be viewed as dynamic obstacles). The idea of grouping distinct time steps into the intervals -- safe interval path planning (SIPP) -- was proposed in~\cite{phillips2011sipp}. SIPP was enhanced to handle any-angle moves in~\cite{yakovlev2017aasipp} but the resulting algorithm is not optimal w.r.t. such moves. The algorithms suggested in this work indeed belong to the SIPP family and provide provably optimal solutions for any-angle path planning with dynamic obstacles.

Finally, one might trace the similarities between \itoaasipp and LPA*~\cite{koenig2004lifelong}. Nonetheless, these algorithms are quite different as we explained earlier in the paper. R* algorithm~\cite{likhachev08rstar} might also be noted in that context as it also reasons over the set of potential parents for a search node and uses the term $g_{low}(n)$ in a similar sense as we do. Still, the idea behind R* (randomizing the search to avoid local minima) is fundamentally different from that of \itoaasipp.

\section{Summary}
In this paper, we have considered the problem of finding time-optimal any-angle paths in the presence of moving obstacles and proposed two algorithms that can obtain provably optimal solutions. A prominent direction of future research is developing more efficient optimal solvers as we believe that the considered problem demands more elegant solutions.

The presented algorithms might also be of a certain value to the multi-agent path finding (MAPF). E.g. the suggested \itoaasipp planner can be used as the low-level solver for the MAPF algorithms of the conflict-based search (CBS) family that support non-uniform cost moves into arbitrary (any-angle) directions: ECBS-CT~\cite{cohen2019optimal}, CCBS~\cite{andreychuk2019multi}, CBS+TAB~\cite{walker2020bipartite}, etc. Note, however, that in these algorithms the low-level planner is invoked thousands of times so its efficiency should be high. We believe that \itoaasipp in its current form is not fast enough to be straightforwardly used in CBS-solvers. This, again, motivates the development of more advanced algorithms that can find time-optimal any-angle paths in the presence of dynamic obstacles.

\section{Acknowledgments} 

In the first place, we would like to thank the reviewers for investing their time in thoroughly reading the preliminary version of the manuscript and providing detailed comments and suggestions, which helped improve this paper considerably. This work was supported by RFBR grant \#20-57-00011. Anton Andreychuk is also supported by the RUDN University Strategic Academic Leadership Program.

\bibliography{aaai21}

\end{document}